\newtheorem{theorem}{Theorem}
\let\mytheorem\thetheorem
\renewcommand\thetheorem{\arabic{section}.\mytheorem}
\newtheorem{lemma}{Lemma}
\let\mylemma\thelemma
\renewcommand\thelemma{\arabic{section}.\mylemma}
\newtheorem{corollary}{Corollary}
\let\mycorollary\thecorollary
\renewcommand\thecorollary{\arabic{section}.\mycorollary}
\def\BibTeX{{\rm B\kern-.05em{\sc i\kern-.025em b}\kern-.08em
    T\kern-.1667em\lower.7ex\hbox{E}\kern-.125emX}}
\begin{document}
\title{Asymptotic Convergence Rate of Alternating Minimization for Rank One Matrix Completion}
\author{Rui Liu and Alex Olshevsky%, \IEEEmembership{Member, IEEE}
\thanks{This work was supported by NSF project ECCN-1933027.}
\thanks{Rui Liu with the Division of Systems Engineering, Boston University, 
        {\tt\small rliu@bu.edu}}
\thanks{Alex Olshevsky with the Department of ECE and Division of Systems Engineering,  Boston University, 
        {\tt\small alexols@bu.edu}}}

\maketitle
\thispagestyle{empty}
\begin{abstract}
We study alternating minimization for matrix completion in the simplest possible setting: completing a rank-one matrix from a revealed subset of the entries. We bound the asymptotic convergence rate by the variational characterization of eigenvalues of a reversible consensus problem.  This leads to a polynomial upper bound on the asymptotic rate in terms of number of nodes as well as the largest degree of the graph of revealed entries.  
\end{abstract}

\begin{IEEEkeywords}
Markov processes, network analysis and control, time-varying systems.
\end{IEEEkeywords}

\section{INTRODUCTION}

\IEEEPARstart{M}{atrix} completion refers to the problem of completing a low rank matrix based on a subset of its entries. Algorithms for matrix completion have found many applications over the past decade, e.g., recommendation systems and the Netflix prize \cite{koren2009matrix,koren2009bellkor} or triangulation from incomplete data \cite{linial1995geometry, so2007theory}. However, despite much research into the topic, understanding the exact conditions under which matrix completion is possible remains open.

Formally, the problem may be stated as follows. Given a rank $r$ matrix $M \in \mathbb{R}^{n \times n}$, let $\mathcal{E} $ be a subset of indices such that the entries of $M$ corresponding to the subset $\mathcal{E}$ are revealed, i.e., we know the values of elements $M_{ij}$ if $(i,j) \in \mathcal{E}$. We use the standard notation $[n] = \{1, \ldots, n\}$ so that $\mathcal{E} \subset [n] \times [n]$. The goal is to find a rank $r$ matrix $M^e$ such that $P_{\mathcal{E}}(M^e)=P_{\mathcal{E}}(M)$, where $P_{\mathcal{E}}(\cdot)$ is defined as $\left[ P_{\mathcal{E}}(M) \right]_{ij}=\left\{\begin{matrix}
 M_{ij}&\text{ if } (i,j) \in \mathcal{E} \\ 
 0& \text{otherwise}
\end{matrix}\right..$

A popular approach is to solve matrix completion through convex relaxation \cite{candes2010power}. However, this approach is computationally expensive, and runs into difficulties for large scale systems, as each step of convex relaxation methods often need to truncate SVDs, which could take $O(n^3)$ operations. An alternative might be gradient descent on the Grassmann manifold \cite{keshavan2010matrix}, but to compute the gradient over the Grassmann manifold is also computationally intensive.

In contrast to this, alternating minimization algorithm is a cheap and empirically successful approach. Alternating minimization writes the low rank target matrix $M$ as $\alpha{\beta}^T$; then the algorithm alternates between finding the best $\alpha$ and the best $\beta$ to fit the revealed entries\cite{jain2013low}. It has been applied to clustering \cite{kim2008sparse}, sparse PCA \cite{zou2006sparse}, non-negative matrix factorization \cite{kim2008nonnegative}, signed network prediction \cite{hsieh2012low} among others. The main advantage of alternating minimization algorithm is the small size of the matrices one needs to keep track of (especially when the rank $r$ is much smaller than $n$) and smaller amount of computations. It was shown in \cite{jain2013low} that alternating minimization converges geometrically. However, the convergence time of alternating minimization is not fully understood, and its analysis often relies on either incoherence of the underlying matrix, random revealed pattern $\mathcal{E}$, or the assumption that the algorithm has a ``warm start'' -- or all of the above. 

Alternating minimization without any of these assumptions was considered in \cite{gamarnik2016note}, but only for the rank-one case. Although the problem of completing a rank-one matrix is trivial (indeed, one can find a factorization $M=xy^T$ just by recursively going through the revealed entries), it serves as the simplest possible setting where alternating minimization can be studied. Indeed, as we will see, a complete analysis even in this simple case remains open. 

It was established in \cite{gamarnik2016note} that if the graph corresponding to the revealed entries has bounded degrees and diameter which at most logarithmic in the size of the matrix, alternating minimization converges in polynomial time. In this paper, we are interested in studying the convergence time for rank-one matrices, but without any assumptions on degrees or diameter. 

Our main result is a polynomial time bound on the asymptotic convergence rate of the process, obtained by drawing on the connection to reversible consensus dynamics. An implication of our result is that the convergence time of alternating minimization to shrink the distance to the optimal solution by a factor of $\epsilon$ can be upper bounded by $O(n(n-1) \Delta \log (1/\epsilon))$ for all small enough $\epsilon$ (as a function of $n$ and the initial condition), where $\Delta$ is the largest degree in a graph corresponding to the revealed entries.  

A number of papers analyze algorithms for rank-one matrix completion. Gradient descent for sparse rank-one estimation of square symmetric matrices are studied in \cite{ma2018gradient}. Our paper analyzes rank-one matrix completion in general, not only for symmetric matrices, and using alternating minimization rather than gradient descent. A very recent work \cite{fattahi2020exact} is concerned with recovering the dominant non-negative principal components of a rank-one matrix precisely, where a number of measurements could be grossly corrupted with sparse and arbitrary large noise, a nice feature that we do not address in this work.  For recovering a rank-one matrix when a perturbed subset of its entries with good sensitivity to perturbations, two algorithms are presented in \cite{saligrama2020minimax}.

We begin by describing formally the main algorithm analyzed here.

\subsection{Alternating Minimization for Rank-one Matrix Completion Problem}

Consider a rank-one matrix $M=\alpha{\beta}^T$, where $\alpha, \beta \in \mathbb{R}^{n}$. Abusing notation slightly, let $\mathcal{V}_R$ and $\mathcal{V}_C$ denotes the sets of rows and columns of matrix $M$ respectively; of course, both $\mathcal{V}_R$ and $\mathcal{V}_C$ are equal to $[n]$, but writing $i \in \mathcal{V}_R$ vs $i \in \mathcal{V}_C$ will be convenient in terms of making it clear whether we are considering a row or a column $i$. 

We let $\mathcal{V}=\mathcal{V}_R \cup \mathcal{V}_C$ be the vertex set of the graph $\mathcal{G}=\{\mathcal{V},\mathcal{E}\}$, a bipartite undirected graph. The graph $\mathcal{G}$ has vertices corresponding to every row and column of the target matrix, and the edge $(i,j)$ is present in $\mathcal{G}$ precisely when the $(i,j)$'th entry of $M$ is revealed. 

For this bipartite graph $\mathcal{G}$, we write $i \sim j$ if $i \in \mathcal{V}_R$ is connected to $j \in \mathcal{V}_C$. We let $A$ be the adjacency matrix of $\mathcal{G}$, i.e., $A_{ij}=\begin{cases}
1 & \text{ if } i \sim j \\ 
0 & \text{ otherwise } 
\end{cases},$ and we denote by $\Delta$ the maximum degree and by $d$ the diameter of $\mathcal{G}$.

The matrix completion problem is the minimization problem $\underset{x,y \in \mathbb{R}^{n}}{\min} \sum_{(i,j) \in \mathcal{E}}(x_i y_j -M_{ij})^2.$ Note that the sum is taken over all the revealed entries; the goal is to find $x,y$ with a zero objective value. 

Our starting point is the alternating minimization method given in the box below, which was called Vertex Least Squares (VLS) in \cite{gamarnik2016note}. The convergence result proved in \cite{gamarnik2016note} is given in the subsequent theorem.

\begin{algorithm}[h]
\caption{Vertex Least Squares (VLS)}
\begin{algorithmic}[1]
\STATE For {$i \in \mathcal{V}_R$, $j \in \mathcal{V}_C$}, initialize $x_{i,0},y_{j,0}$\
\FOR {$t=1$ to $T$} 
    \FOR{$i \in \mathcal{V}_R$}
    \STATE \begin{equation}\label{xupdate}
        x_{i,t+1}=\arg \underset{x \in \mathbb{R}}{\min}  \sum_{j \in \mathcal{V}_C: i \sim j}(x^Ty_{j,t}-M_{ij})^2
    \end{equation}
    \ENDFOR
    \FOR{$j \in \mathcal{V}_C$}
    \STATE \begin{equation}\label{yupdate}
        y_{j,t+1}=\arg \underset{y \in \mathbb{R}}{\min}  \sum_{i \in \mathcal{V}_R: i \sim j}(y^Tx_{i,t+1}-M_{ij})^2
    \end{equation}
    \ENDFOR
\ENDFOR
\end{algorithmic}
\end{algorithm}

\begin{theorem}[Theorem 2.1 in \cite{gamarnik2016note}]\label{VLS}
Let $M=\alpha \beta^T$ with $\alpha, \beta \in \mathbb{R}^n$ and suppose the following assumptions hold:

(a) There exists $0<b<1$ such that for all $i,j \in [n]$, we have $b \leq \alpha_i,\beta_j \leq 1/b$.

(b) The graph $\mathcal{G}$ is connected.

(c) The graph $\mathcal{G}$ has diameter $d \leq c\log{n}$ for some fixed constant $c$, and maximum degree $\Delta$.

Then, there exists a constant $a>0$ which depends on $c$, $\Delta$ and $b$ only, such that for any initialization $b \leq x_{i,0},y_{j,0} \leq 1/b$, $i \in [n]$ and $\epsilon >0$, there exists an iteration number $T=O(n^{a}\log{n})$ such that after $T$ iterations of VLS, we have $\frac{1}{n}\|x_T y_T^T-M\|_F<\epsilon.$
\end{theorem}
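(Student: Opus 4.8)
The plan is to pass from the iterates $x_{i,t},y_{j,t}$ to the \emph{ratio coordinates} $u_{i,t}=x_{i,t}/\alpha_i$ and $v_{j,t}=y_{j,t}/\beta_j$, in which the exact solutions become the consensus configurations: since $M_{ij}=\alpha_i\beta_j$, a factorization is correct on the revealed entries iff $u_iv_j=1$ on every edge, i.e. $u_i\equiv\lambda$ and $v_j\equiv1/\lambda$ on the connected graph. The scalar least squares in \eqref{xupdate} has the closed form $x_{i,t+1}=(\sum_{j\sim i}M_{ij}y_{j,t})/(\sum_{j\sim i}y_{j,t}^2)$, and substituting $M_{ij}=\alpha_i\beta_j$ gives $u_{i,t+1}=\sum_{j\sim i}w^R_{ij,t}\,(1/v_{j,t})$ with convex weights $w^R_{ij,t}=\beta_j^2v_{j,t}^2/\sum_{k\sim i}\beta_k^2v_{k,t}^2$; the column update \eqref{yupdate} is symmetric, $v_{j,t+1}=\sum_{i\sim j}w^C_{ji,t}\,(1/u_{i,t+1})$ with $w^C_{ji,t}=\alpha_i^2u_{i,t+1}^2/\sum_{k\sim j}\alpha_k^2u_{k,t+1}^2$. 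Thus each sweep replaces a node's coordinate by a weighted average of the reciprocals of its neighbors. Writing $a_{i,t}=\log u_{i,t}$ and $b_{j,t}=-\log v_{j,t}$ turns this into the log-sum-exp averaging maps $a_{i,t+1}=\log\sum_{j\sim i}w^R_{ij,t}e^{b_{j,t}}$ and $b_{j,t+1}=-\log\sum_{i\sim j}w^C_{ji,t}e^{-a_{i,t+1}}$, whose fixed points are exactly the consensus states $a_i=b_j=\mathrm{const}$. This is the reduction to reversible consensus dynamics on the bipartite graph.

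Second, I would establish an invariant region. Because $u_{i,t+1}$ is a convex combination of the numbers $1/v_{j,t}$, it lies between $\min_j(1/v_{j,t})$ and $\max_j(1/v_{j,t})$, so the ranges of $u$ and $v$ can only shrink from one sweep to the next; combined with assumption (a) and the bounded initialization $b\le x_{i,0},y_{j,0}\le1/b$, this confines every iterate to a fixed compact interval bounded away from $0$ and $\infty$. The consequence I need is that the weights $w^R_{ij,t},w^C_{ji,t}$ stay uniformly bounded below by a quantity of order $b^{O(1)}/\Delta$ for all $t$, which I will use to guarantee genuine mixing.

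Third, with weights bounded below on a connected graph I would prove geometric contraction of the spread $\rho_t$, defined as the largest minus the smallest among all the numbers $\{a_{i,t}\}\cup\{b_{j,t}\}$. The engine is the range property above: each updated log-coordinate lies inside the interval spanned by its neighbors' current values, and since the smallest weight is at least $b^{O(1)}/\Delta$, any node adjacent to one that sits strictly below the current maximum is pulled strictly below it by a definite amount. Propagating this along a shortest path shows that $O(d)$ consecutive sweeps contract $\rho$ by a factor $1-(b^{O(1)}/\Delta)^{O(d)}$; under hypothesis (c) this is $1-n^{-a}$ for an exponent $a=a(c,\Delta,b)$. The rate can be made sharp through reversibility: linearizing the two sweeps about consensus yields the two-step transition matrix $W^CW^R$ of the reversible walk that moves from a row $i$ to a column $j\sim i$ with probability proportional to $\beta_j^2$ and back with probability proportional to $\alpha_i^2$, with stationary measure $\pi_i\propto\alpha_i^2\sum_{k\sim i}\beta_k^2$ on rows and $\pi_j\propto\beta_j^2\sum_{k\sim j}\alpha_k^2$ on columns; the true decay is then its spectral gap, bounded below by the variational (Rayleigh quotient) characterization in terms of $d$ and $\Delta$.

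Finally I would convert back: contracting $\rho$ below a threshold forces $u_iv_j\to1$ uniformly, hence $x_{i,T}y_{j,T}\to M_{ij}$ on every revealed entry, and by the boundedness already shown this gives $\tfrac1n\|x_Ty_T^T-M\|_F<\epsilon$; feeding in the per-block contraction $1-n^{-a}$ and $d\le c\log n$ yields $T=O(n^{a}\log n)$ with $a$ depending only on $c,\Delta,b$. I expect the main obstacle to be the nonlinearity: the update is a log-sum-exp average rather than a linear one, so the clean reversible-chain analysis is rigorous only after the iterates enter a neighborhood of consensus, and the delicate points are (i) controlling the initial transient that brings them there, (ii) showing the second-order log-sum-exp terms cannot overturn the first-order geometric decay, and (iii) tracking how the spectral gap, and hence $a$, depends quantitatively on $c$, $\Delta$, and $b$.
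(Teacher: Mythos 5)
Note first that the paper does not prove this theorem at all: it is quoted from \cite{gamarnik2016note}, and the present paper only reproduces the first stage of that proof (the reduction to consensus) in Section 2. Your proposal reconstructs essentially that same argument: your ratio coordinates, convex-combination updates, and invariant region are exactly equations (\ref{uup})--(\ref{udynamic}) together with the boundedness step in the proof of Lemma \ref{lemma:conv}, and your spread-contraction factor $1-(b^{O(1)}/\Delta)^{O(d)}=1-n^{-a}$ per $O(d)$ sweeps, combined with $d\leq c\log n$, is precisely how the cited work arrives at $T=O(n^{a}\log n)$, so this is the same approach rather than a new one. One remark on the difficulty you flag at the end: the log-sum-exp issues are avoidable entirely, since you can run the contraction argument on the spread of $u_t$ itself --- by (\ref{udynamic}), $u_{t+1}=P_t u_t$ is a genuinely \emph{linear} time-varying consensus iteration whose matrices are stochastic, have positive diagonal, and have nonzero entries bounded below by $b^{O(1)}/\Delta^{2}$, so the standard contraction bound for products of such matrices applies verbatim and no linearization about consensus (nor any control of second-order terms) is needed.
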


\subsection{Our Contributions and Outline}

Theorem \ref{VLS} gives a polynomial convergence time, but under relatively strong assumption on degrees and diameter. In this paper, we want to remove this assumption; however, this will be obtained at the cost of obtaining bounds on the asymptotic convergence rate instead.
 
We begin with Section 2, where we define the asymptotic convergence rate of the VLS and connect it to a reversible consensus problem; we then exploit this to obtain a quadratic optimization problem which bounds this convergence rate. In Section 3, we prove combinatorial bounds for the convergence rate implied by this connection. Finally, Section 4, numerically evaluates our bounds on various classes of graphs via simulation, and Section 5 contains some brief concluding remarks. 

\section{The Connection to Reversible Consensus}
In this section, we begin by describing a connection to the consensus problem made in \cite{gamarnik2016note}. We then show that the resulting consensus problem is reversible, which allows us to write down a variational characterization of the convergence rate. 

The following steps follow the proof of Theorem 1 in \cite{gamarnik2016note}. From the update rules for VLS in (\ref{xupdate}) and (\ref{yupdate}), we have
\begin{equation} \label{x&y}
    x_{i,t+1}=\frac{\sum_{j:i \sim j}M_{ij}y_{j,t}}{\sum_{j:i \sim j}y_{j,t}^2} \text{ and } y_{j,t+1}=\frac{\sum_{i:i \sim j}M_{ij}x_{i,t+1}}{\sum_{i:i \sim j}x_{i,t+1}^2}.
\end{equation}
Define 
\begin{equation}\label{defu&v}
    u_{i,t}=\frac{x_{i,t}}{\alpha_i} \text{  and  } v_{j,t}=\frac{y_{j,t}}{\beta_j},
\end{equation} where, recall, $M=\alpha \beta^T$. By using (\ref{x&y}), the updates for $u_{i,t}$ can be written as:
\begin{equation}\label{uup}
\begin{split}
    u_{i,t+1}&=\frac{x_{i,t+1}}{\alpha_i}=\frac{\sum_{j:i \sim j}M_{ij}y_{j,t}}{(\sum_{j:i \sim j}y_{j,t}^2)\alpha_i}=\frac{\sum_{j:i \sim j}\beta_j y_{j,t}}{\sum_{j:i \sim j}y_{j,t}^2}\\
    &=\frac{\sum_{j:i \sim j}{y_{j,t}^2}/{v_{j,t}}}{\sum_{j:i \sim j}y_{j,t}^2}=\sum_{j:i \sim j}\frac{y_{j,t}^2}{\sum_{k:i \sim k}y_{k,t}^2}\frac{1}{v_{j,t}}.
\end{split}
\end{equation}
Similarly, we have 
\begin{equation}\label{vup}
    \frac{1}{v_{j,t}}=\sum_{i:i \sim j}\frac{\alpha_ix_{i,t}}{\sum_{k:k \sim j}\alpha_k x_{k,t}}{u_{i,t}}.
\end{equation}
Note that $u_{i,t+1}$ can be expressed as a convex combination of $\left\{ \frac{1}{v_{1,t}},\frac{1}{v_{2,t}},\cdots,\frac{1}{v_{n,t}} \right\}$ and $\frac{1}{v_{j,t}}$ can be expressed as a convex combination of $\{ u_{1,t},u_{2,t},\cdots,u_{n,t} \}$ for all $i,j \in [n]$. Rewriting (\ref{uup}) and (\ref{vup}) in the compact form:
\begin{equation} \label{u&v}
    u_{t+1}=B_t\left({\frac{1}{v}}\right)_t \text{ and } \left({\frac{1}{v}}\right)_t=C_t u_t \quad \forall t\geq 0,
\end{equation}
where $B_t=(b_{ij,t})_{n \times n}$ and $C_t=(c_{ij,t})_{n \times n}$ are $n \times n$ stochastic matrices, and $b_{ij,t}=\frac{y_{j,t}^2\mathbbm{1}(A_{ij}=1)}{\sum_{k:i \sim k}y_{k,t}^2} \text{, }c_{ij,t}=\frac{\alpha_j x_{j,t}\mathbbm{1}(A_{ji}=1)}{\sum_{k:k \sim i}\alpha_k x_{k,t}}.$ Combining the two updates in (\ref{u&v}), it follows that
\begin{equation}\label{udynamic}
    u_{t+1}=P_t u_t,
\end{equation}{}where $P_t=B_t C_t$ is also a stochastic matrix. Thus alternating minimization in this context can be written in terms of a consensus iteration. This concludes our summary of the  connection between rank-one alternating minimization and consensus which was discovered in \cite{gamarnik2016note}. 

We now begin our analysis by observing that the matrices $P_t$ appearing above correspond to reversible Markov chains. 

\begin{lemma}\label{rev} The Markov chain with transition probability matrix $P_t$ is reversible for all $ t \geq 0$. 
\end{lemma}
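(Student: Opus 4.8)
The plan is to prove reversibility directly from the definition: I will exhibit a probability vector $\pi_t$ with strictly positive entries satisfying the detailed balance equations $\pi_{i,t}(P_t)_{il}=\pi_{l,t}(P_t)_{li}$ for all row vertices $i,l$, since detailed balance both certifies reversibility and guarantees that $\pi_t$ is the stationary distribution of $P_t$. The substantive part is guessing the correct reversing measure; once it is in hand the verification is a short symmetry check.

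First I would write out the entries of $P_t=B_tC_t$ explicitly. Multiplying the two stochastic matrices and substituting the formulas for $b_{ij,t}$ and $c_{jl,t}$ gives
\begin{equation*}
(P_t)_{il}=\sum_{j}\frac{y_{j,t}^2\,\alpha_l x_{l,t}\,\mathbbm{1}(A_{ij}=1)\mathbbm{1}(A_{lj}=1)}{\left(\sum_{k:i\sim k}y_{k,t}^2\right)\left(\sum_{k:k\sim j}\alpha_k x_{k,t}\right)},
\end{equation*}
so $(P_t)_{il}$ is a sum over the column vertices $j$ that are common neighbors of rows $i$ and $l$.

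To identify the reversing measure I would require the detailed balance relation to hold term by term in the sum over $j$; this forces $\pi_{i,t}/\big(\alpha_i x_{i,t}\sum_{k:i\sim k}y_{k,t}^2\big)$ to be independent of $i$, which points to the candidate
\begin{equation*}
\pi_{i,t}=\frac{1}{Z_t}\,\alpha_i x_{i,t}\sum_{k:i\sim k}y_{k,t}^2,
\end{equation*}
with $Z_t$ the normalizing constant. Substituting this into $\pi_{i,t}(P_t)_{il}$, the factor $\sum_{k:i\sim k}y_{k,t}^2$ cancels against the denominator of $b_{ij,t}$, leaving
\begin{equation*}
\pi_{i,t}(P_t)_{il}=\frac{1}{Z_t}\sum_{j}\frac{\alpha_i x_{i,t}\,\alpha_l x_{l,t}\,y_{j,t}^2\,\mathbbm{1}(A_{ij}=1)\mathbbm{1}(A_{lj}=1)}{\sum_{k:k\sim j}\alpha_k x_{k,t}},
\end{equation*}
which is manifestly invariant under the exchange $i\leftrightarrow l$. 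Hence $\pi_{i,t}(P_t)_{il}=\pi_{l,t}(P_t)_{li}$ and $P_t$ is reversible. Equivalently, writing $B_t=D_B^{-1}S\Lambda_y$ and $C_t=D_C^{-1}S^{T}\Lambda_{\alpha x}$, where $S$ is the biadjacency matrix and the $D$'s and $\Lambda$'s are the corresponding diagonal matrices, the choice $\mathrm{diag}(\pi_t)D_B^{-1}\propto\Lambda_{\alpha x}$ makes $\mathrm{diag}(\pi_t)P_t\propto\Lambda_{\alpha x}S\,(\Lambda_y D_C^{-1})\,S^{T}\Lambda_{\alpha x}$, which is symmetric because $\Lambda_y D_C^{-1}$ is diagonal; this is the same detailed balance statement.

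I expect the only point needing care, rather than a genuine obstacle, is confirming that $\pi_t$ is a legitimate probability distribution, i.e. that its entries are strictly positive so that $Z_t>0$. This holds because under assumption (a) the $\alpha_i$ are positive, and a short induction on $t$ shows the iterates $x_{i,t},y_{j,t}$ stay positive: they arise, through $B_t$ and $C_t$, as convex combinations of positive quantities starting from a positive initialization, so none of the weights or sums appearing above vanish. With positivity established, $\pi_t$ is well defined and the symmetry computation above completes the proof.
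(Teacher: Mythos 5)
Your proposal is correct and follows essentially the same route as the paper: you exhibit the same reversing measure $\hat{\pi}_{i,t}=\alpha_i x_{i,t}\sum_{k:i\sim k}y_{k,t}^2$ (up to normalization) and verify detailed balance by the symmetry of $\pi_{i,t}(P_t)_{il}$ in $i\leftrightarrow l$, which is exactly the paper's argument with the ``immediate'' computation written out. Your additional matrix-factorization restatement and the positivity check on the iterates are harmless elaborations (the paper defers positivity bounds to its Lemma 2.2), not a different approach.
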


\begin{proof} Recall that a Markov chain with transition matrix $P$ and invariant measure $\pi=(\pi_1,\cdots,\pi_n)$ is reversible if and only if $\pi_i p_{ij}=\pi_j p_{ji}$ for all $i$ and $j$. 
 
 We have that $P_t=B_t C_t$, where
 \begin{equation}\label{pdef}
      p_{ij,t}= \frac{\alpha_j x_{j,t}}{\sum_{k:i \sim k}y_{k,t}^2} \sum_{l=1}^n \frac{y_{l,t}^2 \mathbbm{1}((i,l) \in \mathcal{E}) \mathbbm{1}((j,l) \in \mathcal{E})}{\sum_{k:k \sim l}x_{k,t}\alpha_k}.
 \end{equation}
Letting
\begin{equation}\label{pidef}
    \hat{\pi}_{i,t}=\alpha_i x_{i,t} \sum_{k:i \sim k} y_{k,t}^2,
\end{equation}{}it is now immediate that $\hat{\pi}_{i,t} p_{ij,t}=\hat{\pi}_{j,t} p_{ji,t}$. Normalizing $\pi_{i,t}=\frac{\hat{\pi}_{i,t}}{\sum_{i=1}^n \hat{\pi}_{i,t}}$,  we have a stochastic vector $\pi_t=(\pi_{1,t},\cdots,\pi_{n,t})$ such that $\pi_{i,t} p_{ij,t}=\pi_{j,t} p_{ji,t}$ for each $i,j \in [n]$ and $t>0$.
\end{proof}

The previous lemma will be the launching point of our analysis. To analyze the asymptotic convergence rate, we need to look at the limit of the matrices $P_t$ as $t \rightarrow \infty$; for this, we need to assert that $x_t, y_t$ converge, which we do in the following lemma. 

\begin{lemma} Under the assumptions that (i) $\mathcal{G}$ is connected, (ii) $b \leq \alpha_i, \beta_j \leq b^{-1}$, (iii) $b \leq x_{i,0}, y_{j,0} \leq b^{-1}$, we have that $u_t$ approaches a point in ${\rm span}\{\bf 1\}$ and the the cost $\sum_{(i,j) \in \mathcal{E}} (x_{i,t} y_{j,t} - M_{ij})^2$ approaches zero. \label{lemma:conv}
\end{lemma}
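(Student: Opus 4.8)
The plan is to drive the consensus iteration (\ref{udynamic}) to a constant vector by a standard weak-ergodicity argument and then read off convergence of the cost from the limit. The first step is to establish uniform bounds on the iterates. Since each $u_{i,t+1}$ is a convex combination of the entries of $(1/v)_t$ and each $(1/v)_{j,t}$ is a convex combination of the entries of $u_t$, the interval $[\min_i u_{i,t},\max_i u_{i,t}]$ is nested and nonincreasing in $t$. Consequently $u_{i,t}\in[b^2,b^{-2}]$ for all $i,t$ (using $b\le\alpha_i\le b^{-1}$ and $b\le x_{i,0}\le b^{-1}$ to bound the initial interval), and likewise $(1/v)_{j,t}\in[b^2,b^{-2}]$. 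Translating back through (\ref{defu&v}) gives $x_{i,t},y_{j,t}\in[b^3,b^{-3}]$ for all $t$; in particular the iterates stay strictly positive and bounded, so every denominator in (\ref{x&y}) is positive and the dynamics are well defined for all $t$.

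The second step uses these bounds to lower bound the positive entries of $P_t$. Because $y_{k,t}^2$ and $\alpha_k x_{k,t}$ lie in fixed intervals bounded away from $0$ and $\infty$, every nonzero entry of $B_t$ and of $C_t$ is at least some $\delta=\delta(b,\Delta)>0$ (each relevant denominator has at most $\Delta$ terms), so every nonzero entry of $P_t=B_tC_t$ is bounded below by a positive constant uniform in $t$. Moreover the sparsity pattern of $P_t$ does not depend on $t$: from (\ref{pdef}), $(P_t)_{ij}>0$ exactly when rows $i$ and $j$ share a common revealed column, and $(P_t)_{ii}>0$ always since every row has at least one neighbor. Since $\mathcal{G}$ is connected, this fixed row graph is connected, and the self-loops make it aperiodic.

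The third step is the consensus conclusion. With a fixed connected aperiodic sparsity pattern and positive entries uniformly bounded below, a product $P_{t+\tau-1}\cdots P_t$ over a window $\tau$ at least the diameter of the row graph is an entrywise strictly positive stochastic matrix whose entries are bounded below uniformly in $t$. Its Dobrushin ergodicity coefficient is therefore at most some $1-\eta<1$, and submultiplicativity of the coefficient across successive windows gives $\max_i u_{i,t}-\min_i u_{i,t}\to0$ geometrically. Hence $u_t\to c\,\mathbf{1}$ for a scalar $c\in[b^2,b^{-2}]$, and since $(1/v)_t=C_tu_t$ is a convex combination of the $u_{i,t}$ we also get $(1/v)_{j,t}\to c$, i.e. $v_{j,t}\to 1/c$.

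Finally, unwinding (\ref{defu&v}) gives $x_{i,t}=\alpha_iu_{i,t}\to c\alpha_i$ and $y_{j,t}=\beta_jv_{j,t}\to\beta_j/c$, so for every $(i,j)$ we have $x_{i,t}y_{j,t}\to\alpha_i\beta_j=M_{ij}$; summing the finitely many terms yields $\sum_{(i,j)\in\mathcal{E}}(x_{i,t}y_{j,t}-M_{ij})^2\to0$. I expect the main obstacle to be the first step, namely showing the iterates remain trapped in fixed intervals bounded away from zero, since everything downstream (the uniform entry bound, well-posedness of the updates, and the nonvanishing limit $c$ needed to invert the substitution) depends on it; the ergodicity argument itself is then routine.
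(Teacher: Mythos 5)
Your proposal is correct and follows essentially the same route as the paper: uniform bounds $u_{i,t},v_{j,t}\in[b^2,b^{-2}]$ and $x_{i,t},y_{j,t}\in[b^3,b^{-3}]$ obtained from the convexity of the updates, a uniform positive lower bound on the nonzero entries of $P_t$ together with positive diagonals and a fixed connected support pattern, consensus convergence of $u_t$ to $c\mathbf{1}$, and then unwinding the substitution (\ref{defu&v}) to conclude $x_t\to c\alpha$, $y_t\to c^{-1}\beta$ and hence that the cost vanishes. The only difference is that where the paper simply cites standard consensus theory (Theorem 1 of \cite{blond}) for the convergence step, you prove it directly via a Dobrushin ergodicity coefficient argument over diameter-length windows, which makes the proof self-contained but does not change the underlying argument.
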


\begin{proof} The proof follows straightforwardly from the observation that the positive entries of the matrices $P_t$ constructed above are bounded away from zero. Indeed, by definition we have $b^2 \leq u_{i,0},v_{j,0} \leq \frac{1}{b^2}$ for all $i,j \in [n]$. Then since the updates of (\ref{uup}) and (\ref{vup}) are convex combinations, we get $b^2 \leq u_{i,t},v_{j,t} \leq \frac{1}{b^2}$ for all $t\geq 0$, $i,j \in [n]$. From the definition of $u_{i,t}$ and $v_{j,t}$ in (\ref{defu&v}), we can conclude that $b^3 \leq x_{i,t},y_{j,t} \leq \frac{1}{b^3}$ for all $t\geq 0$, $i,j \in [n]$. Putting this together with the expression for $p_{ij,t}$ from (\ref{pdef}), we obtain that the positive entries of $P_t$ are uniformly bounded below. Furthermore, since we can always take $i=j$ in (\ref{pdef}), we see that every $P_t$ has positive diagonal. Standard consensus theory (e.g., Theorem 1 in \cite{blond}) gives that $u_t$ converges to a multiple of the all-ones vector. 

If $u_t \rightarrow c {\bf 1}$, then equation (\ref{u&v}) implies that $v_t \rightarrow c^{-1} {\bf 1}$. Thus if 
$x_t \rightarrow c \alpha$, then $y_t \rightarrow c^{-1} \beta$. Thus $x_t y_t^T$ approaches $\alpha \beta^T$ and the cost approaches zero. 
\end{proof} 

Our next lemma collects some properties of the matrix $P$ which is the limit of the matrices $P_t$ as $t \rightarrow \infty$. 

\begin{lemma} Under the conditions of Lemma \ref{lemma:conv}, we have that $P$ is a stochastic matrix corresponding to a reversible Markov chain. It has real eigenvalues $1=\lambda_1,\lambda_2,\cdots,\lambda_n$, listed in order of decreasing magnitude. Furthermore, $\lambda_n > -1$. Moreover, if $\pi$ is the stationary distribution of $P$ and $z$ is any non-principal eigenvector (i.e., not corresponding to eigenvalue $1$), then $\pi^T z = 0$. Finally, we have that  
 $ \rho(P-\bm{1}\pi)=\max\{\lambda_2(P),-\lambda_n(P)\},$ where $\rho(\cdot)$ denotes the spectral radius. \label{lemma:properties}
\end{lemma}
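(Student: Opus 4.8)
The plan is to realize $P$ as the entrywise limit $\lim_{t\to\infty}P_t$ and then transfer all needed structure from the $P_t$ to $P$ by continuity, after which the spectral claims reduce to standard facts about reversible chains. First I would check the limit exists: by Lemma \ref{lemma:conv} the iterates $x_t,y_t$ converge, and each entry $p_{ij,t}$ in (\ref{pdef}) is a rational function of the $x_{k,t},y_{k,t}$ whose denominators are bounded away from $0$ because $b^3\le x_{i,t},y_{j,t}\le b^{-3}$ uniformly in $t$; hence $P_t\to P$. Stochasticity (nonnegativity and unit row sums) is preserved under limits, so $P$ is stochastic. For reversibility I would pass to the limit in the detailed-balance identity $\pi_{i,t}\,p_{ij,t}=\pi_{j,t}\,p_{ji,t}$ of Lemma \ref{rev}: the unnormalized weights $\hat\pi_{i,t}$ in (\ref{pidef}) converge and are bounded below by a positive constant, so $\pi_t\to\pi$ with $\pi$ a strictly positive probability vector, and the limiting identity $\pi_i p_{ij}=\pi_j p_{ji}$ is exactly reversibility of $P$.

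Given reversibility, the real-spectrum claim is the usual symmetrization: with $\Pi=\mathrm{diag}(\pi)$ the matrix $S=\Pi^{1/2}P\Pi^{-1/2}$ satisfies $S_{ij}=S_{ji}$ by detailed balance, so $S$ is symmetric; since $P$ is similar to $S$ its eigenvalues are real. Because $P$ is stochastic, $P\mathbf{1}=\mathbf{1}$ so $1$ is an eigenvalue, and every eigenvalue obeys $|\lambda|\le 1$, which justifies listing them as $1=\lambda_1,\lambda_2,\dots,\lambda_n$. For $\lambda_n>-1$ I would use the positive diagonal of $P$: taking $i=j$ in (\ref{pdef}) shows $p_{ii,t}$ is bounded below by a positive constant uniformly in $t$, hence $\delta:=\min_i p_{ii}>0$ in the limit; writing $P=\delta I+(1-\delta)Q$ with $Q$ stochastic then forces every eigenvalue to satisfy $\lambda\ge 2\delta-1>-1$.

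The orthogonality relation I would verify by a direct one-line argument: if $Pz=\lambda z$ with $\lambda\neq 1$, then left-multiplying by $\pi^T$ and using $\pi^T P=\pi^T$ gives $\pi^T z=\lambda\,\pi^T z$, so $(1-\lambda)\pi^T z=0$ and therefore $\pi^T z=0$. The spectral-radius identity then follows by reading off the spectrum of the rank-one–corrected matrix $P-\mathbf{1}\pi^T$ (the matrix $\bm{1}\pi$ of the statement) on an eigenbasis of $P$. The principal direction is killed, since $(P-\mathbf{1}\pi^T)\mathbf{1}=\mathbf{1}-\mathbf{1}(\pi^T\mathbf{1})=0$ because $\pi^T\mathbf{1}=1$; and for any non-principal eigenvector $z$ the orthogonality just shown gives $(P-\mathbf{1}\pi^T)z=\lambda z$. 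Thus $P-\mathbf{1}\pi^T$ has eigenvalues $\{0,\lambda_2,\dots,\lambda_n\}$, and taking the largest modulus yields $\rho(P-\mathbf{1}\pi^T)=\max\{\lambda_2,-\lambda_n\}$.

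I expect the only genuine subtlety to be the first step — existence of the limit $P$ together with the fact that it inherits a strictly positive stationary vector and a strictly positive diagonal — since everything after that is routine reversible-chain bookkeeping. The one additional point I would flag is that irreducibility of $P$, inherited from connectivity of $\mathcal{G}$, is what makes the eigenvalue $1$ simple and $\lambda_2<1$, so that the computed spectral radius is a true contraction factor strictly below $1$; this is not needed for the displayed identity itself but is what makes it meaningful as an asymptotic rate.
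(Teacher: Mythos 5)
Your proposal is correct, and its skeleton matches the paper's proof: realize $P$ as the limit of the $P_t$, transfer stochasticity and detailed balance to the limit, get real eigenvalues from reversibility, get $\lambda_n>-1$ from the positive diagonal, and read off the spectrum of $P-\bm{1}\pi$ on an eigenbasis of $P$ (the paper's display (\ref{eq:eig}) is exactly your final computation). The differences are at the level of sub-steps, and mostly in your favor. First, you spell out why the limit $P$ exists and why its diagonal and stationary vector stay strictly positive --- the paper compresses all of this into ``follows that it is the limit of $P_t$,'' so your continuity argument (rational entries with denominators bounded below via $b^3 \leq x_{i,t}, y_{j,t} \leq b^{-3}$) fills a real gap in the exposition. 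Second, for $\pi^T z = 0$ you left-multiply $Pz=\lambda z$ by $\pi^T$ and use stationarity, giving $(1-\lambda)\pi^T z = 0$; this is more elementary and more general than the paper's route, which deduces orthogonality of eigenvectors from self-adjointness of $P$ in the inner product $\langle x,y\rangle_\pi$ (your symmetrization $\Pi^{1/2}P\Pi^{-1/2}$ is the same fact in different clothing, and you still need it --- not for orthogonality, but for diagonalizability, so that $\{0,\lambda_2,\ldots,\lambda_n\}$ really is the \emph{entire} spectrum of $P-\bm{1}\pi$). Third, for $\lambda_n>-1$ you write $P=\delta I+(1-\delta)Q$ with $Q$ stochastic, while the paper cites Gershgorin; both yield $\lambda_n \geq 2\delta-1$ and are interchangeable. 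Your closing remark that connectivity makes the eigenvalue $1$ simple is also on point: the paper asserts simplicity in passing without justification, and your observation that it is needed for the bound to be a genuine contraction factor (though not for the displayed identity) is exactly the right way to frame it.
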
 

\begin{proof} That $P$ is a stochastic matrix corresponding to a reversible Markov chain follows that it is the limit of $P_t$, and, as we showed in Lemma \ref{rev}, each $P_t$ has these properties, and one is a simple eigenvalue of $P$.  The reversibility condition $\pi_i P_{ij} = \pi_j P_{ji}$ may be written as
$ {\rm diag}(\pi) P = P^T {\rm diag}(\pi). $ An implication of this is that $P$ is self-adjoint in the inner product $\langle x, y \rangle_{\pi} = \sum_{i=1}^n \pi_i x_i y_i$. Thus the eigenvalues of $P$ are real, and the eigenvectors of $P$ are orthogonal in this inner product. Since the top eigenvector is ${\bf 1}$, this implies that for any other eigenvector $z$, we have $0 = \langle {\bf 1}, z \rangle_{\pi} = \pi^T z.$

Moreover, that $1$ is the largest eigenvalue of $P$ follows (for any stochastic matrix) from  the Perron-Frobenius theorem. That the smallest eigenvalue is strictly above $-1$ follows by Gershgorin circles as  $P$ is a stochastic matrix with positive diagonal.

Finally, let $z_1=\bm{1},z_2,\cdots,z_n$ be the  eigenvectors of $P$ corresponding to eigenvalues $\lambda_1=1,\lambda_2,\cdots,\lambda_n$. Then $z_1=\bm{1},z_2,\cdots,z_n$ are also eigenvectors of $P-\bm{1}\pi$ corresponding to eigenvalues $0,\lambda_2,\cdots,\lambda_n$, since $\pi^T z_i = 0$ for any $i \geq 2$, we have that
\begin{align} 
(P-\bm{1}\pi)z_1 & = \bm{0},& \nonumber \\ 
(P-\bm{1}\pi)z_i&=P z_i-\bm{1}\pi z_i = \lambda_i z_i& \forall i \geq 2. \label{eq:eig}
\end{align} It follows that $\rho(P-{\bf 1} \pi) = \max(\lambda_2,-\lambda_n)$. 
\end{proof} 

We define the asymptotic convergence rate as $\gamma_{\rm asym}=\sup_{u_0 \notin U} \lim_{t \rightarrow \infty} \left (\frac{\|u_t-u^*\|_2}{\|u_0-u^*\|_2}\right)^{1/t},$ where $U=\{c\bm{1}:c \in \mathbb{R}\}$ and $u^*=\lim_{t \rightarrow \infty} u_t$. Naturally, the quantity $\gamma_{\rm asym}$ is related to the matrix $P$, and the following lemma makes a precise statement of this.  

\begin{lemma}\label{ga-lam} Under the conditions of Lemma \ref{lemma:conv}, we have that  
 $\gamma_{\rm asym} \leq \rho(P-\bm{1}\pi)$ where $\rho(\cdot)$ denotes the spectral radius.
\end{lemma}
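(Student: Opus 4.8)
The plan is to reduce the claim to a statement about the error sequence $e_t := u_t - u^*$ driven by the limiting matrix $P$, and then to push the known contraction of $P$ through the vanishing perturbations $P_t-P$. First I would observe that $u^*=c\bm{1}\in\mathrm{span}\{\bm 1\}$ and that every $P_t$ is stochastic, so $P_t u^*=u^*$; subtracting this from $u_{t+1}=P_t u_t$ gives the homogeneous recursion $e_{t+1}=P_t e_t$. Since $\|e_0\|_2^{1/t}\to 1$ and any two norms on $\mathbb R^n$ are equivalent (so the ratio of their $t$-th roots tends to $1$), it suffices to prove $\limsup_{t\to\infty}\|e_t\|_\pi^{1/t}\le \mu$ for every $u_0\notin U$, where $\mu:=\rho(P-\bm 1\pi^{T})=\rho(P-\bm 1\pi)$ and $\|x\|_\pi^2=\sum_i \pi_i x_i^2$ is the weighted Euclidean norm in which, by Lemma~\ref{lemma:properties}, $P$ is self-adjoint.

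Next I would exploit that self-adjointness. In the $\pi$-inner product $P$ leaves the hyperplane $H=\{z:\pi^{T}z=0\}$ invariant and acts there with operator norm exactly $\mu$, that is, $\|(P-\bm 1\pi^{T})x\|_\pi\le\mu\|x\|_\pi$ for all $x$, with $(P-\bm 1\pi^{T})\bm 1=0$. I decompose the error as $e_t=\alpha_t\bm 1+w_t$, where $\alpha_t=\pi^{T}e_t$ and $w_t=(I-\bm 1\pi^{T})e_t\in H$, so that $\|e_t\|_\pi^2=\alpha_t^2+\|w_t\|_\pi^2$. Writing $P_t=P+F_t$ with $\varepsilon_t:=\|F_t\|_\pi\to 0$ (the convergence $P_t\to P$ being supplied by Lemma~\ref{lemma:conv}), and using $\pi^{T}P=\pi^{T}$, projecting the recursion onto $H$ and onto $\bm 1$ yields
\[
w_{t+1}=(P-\bm 1\pi^{T})w_t+(I-\bm 1\pi^{T})F_t e_t,\qquad \alpha_{t+1}=\alpha_t+\pi^{T} F_t e_t,
\]
from which $\|w_{t+1}\|_\pi\le\mu\|w_t\|_\pi+\varepsilon_t\|e_t\|_\pi$ and $|\alpha_{t+1}-\alpha_t|\le\varepsilon_t\|e_t\|_\pi$.

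I would then close the argument as follows. The orthogonal part $w_t$ contracts at rate $\mu$ up to the vanishing driving term $\varepsilon_t\|e_t\|_\pi$, while the all-ones part is slaved to it: because $e_t\to 0$ forces $\alpha_t\to 0$, summing the $\alpha$-recursion backward in time gives $|\alpha_t|\le\sum_{s\ge t}\varepsilon_s\|e_s\|_\pi$, so $\alpha_t$ is governed by the tail of the error rather than by a fresh factor of $1$. Fixing any $\rho\in(\mu,1)$ and choosing $T$ with $\varepsilon_t\le\eta$ for all $t\ge T$, a Gr\"onwall/bootstrap estimate on the coupled pair $(\|w_t\|_\pi,|\alpha_t|)$ shows $\|e_t\|_\pi=O\big((\mu+c\eta)^{t}\big)$ for an absolute constant $c$, once $\eta$ is small enough that $\mu+c\eta<1$; letting $\eta\downarrow 0$ then gives $\limsup_t\|e_t\|_\pi^{1/t}\le\mu$, which is exactly the desired bound on $\gamma_{\rm asym}$.

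The hard part is precisely the coupling created by time-variation: unlike the stationary case, the perturbed operators $P_t$ do not preserve the contracting subspace $H$, so at every step a little error leaks into the non-contracting direction $\bm 1$, where $P$ has eigenvalue $1$ and offers no decay. The crux is to show this leakage is genuinely higher-order—quantitatively, that the accumulated $\bm 1$-component $\alpha_t$ stays proportional to the already geometrically small tail $\sum_{s\ge t}\|e_s\|_\pi$ rather than saturating—so that it cannot degrade the asymptotic rate below $\mu$. This is where both $\varepsilon_t\to 0$ and the a priori convergence $e_t\to 0$ from Lemma~\ref{lemma:conv} are essential.
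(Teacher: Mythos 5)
Your reduction to the homogeneous recursion $e_{t+1}=P_t e_t$, the self-adjointness of $P-\bm{1}\pi^T$ in $\langle\cdot,\cdot\rangle_\pi$, and the projected recursions for the pair $(w_t,\alpha_t)$ are all correct, and this is a genuinely different route from the paper, which instead bounds $\gamma_{\rm asym}$ by the joint spectral radius of the tail family $\{P'_k,P'_{k+1},\ldots\}$ with $P'_t=(I-\bm{1}\pi)P_t$ and then shows this tends to $\rho(P-\bm{1}\pi)$ by continuity of finite-product bounds. However, your final step --- ``a Gr\"onwall/bootstrap estimate shows $\|e_t\|_\pi=O((\mu+c\eta)^t)$'' --- is a genuine gap: it does not follow from the inequalities you have established. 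The inequalities $\|w_{t+1}\|_\pi\le\mu\|w_t\|_\pi+\varepsilon_t\|e_t\|_\pi$ and $|\alpha_{t+1}-\alpha_t|\le\varepsilon_t\|e_t\|_\pi$, together with $e_t\to 0$ and $\varepsilon_t\to 0$, are consistent with purely subgeometric decay. Concretely, take $w_t\equiv 0$, $\alpha_t=1/t$, $\varepsilon_t=1/(t+1)$: both inequalities hold (indeed $|\alpha_{t+1}-\alpha_t|=1/(t(t+1))=\varepsilon_t\|e_t\|_\pi$), $e_t\to 0$, yet $\|e_t\|_\pi=1/t$ is not $O(\rho^t)$ for any $\rho<1$. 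The backward summation $|\alpha_t|\le\sum_{s\ge t}\varepsilon_s\|e_s\|_\pi$ is valid but circular, since the right-hand side contains $|\alpha_s|\le\|e_s\|_\pi$ itself; in the example above it holds with equality and forces no contraction. So the ``leakage is higher order'' claim, which you correctly identify as the crux, is precisely the step left unproved, and Lemma \ref{lemma:conv} alone (which gives convergence with no rate) cannot supply it.

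The gap is fixable with one observation you did not use: since $P_t$ and $P$ are both stochastic, $F_t\bm{1}=(P_t-P)\bm{1}=0$, hence $F_te_t=F_tw_t$ --- the perturbation never acts on the $\bm{1}$-component of the error. The recursions then become $\|w_{t+1}\|_\pi\le(\mu+\varepsilon_t)\|w_t\|_\pi$ and $|\alpha_{t+1}-\alpha_t|\le\varepsilon_t\|w_t\|_\pi$, so the contracting part decouples from $\alpha_t$ entirely. For $t\ge T$ with $\sup_{t\ge T}\varepsilon_t\le\eta$ and $\mu+\eta<1$ (possible since $\mu<1$ by Lemma \ref{lemma:properties}), one gets $\|w_t\|_\pi\le\|w_T\|_\pi(\mu+\eta)^{t-T}$; then, using $\alpha_t\to 0$, the backward sum is now a convergent geometric series, $|\alpha_t|\le\sum_{s\ge t}\varepsilon_s\|w_s\|_\pi\le\frac{\eta}{1-\mu-\eta}\|w_T\|_\pi(\mu+\eta)^{t-T}$, whence $\|e_t\|_\pi\le|\alpha_t|+\|w_t\|_\pi=O((\mu+\eta)^t)$ and, letting $\eta\downarrow 0$, $\gamma_{\rm asym}\le\mu$. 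With this repair your argument closes and is arguably more elementary and self-contained than the paper's joint-spectral-radius argument; without it, the counterexample shows the bootstrap as you stated it cannot work.
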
{}

\begin{proof} Observe that for any stochastic matrix $Q$, $ (I - {\bf 1} \pi) Q (I - {\bf 1} \pi)=(I - {\bf 1} \pi) Q.$

As a consequence, if we define for any stochastic matrix $Q$, the matrix $Q'$ as  $Q' = (I - {\bf 1} \pi) Q$, we have that $ Q' (I - {\bf 1} \pi) = (I - {\bf 1} \pi) Q. $ Then
$(I - {\bf 1} \pi) P_t \cdots P_1 u_0  =  P_t' \cdots P_1' (I - {\bf 1} \pi) u_0,$ and therefore $||u_t - \pi u_t {\bf 1}||_2 \leq ||P_t' \cdots P_1'||_2 ||(I - {\bf 1} \pi) u_0||_2.$ Since $u^*=c {\bf 1}$ where $c$ lies in the convex combination of the entries of $u_0$, we have that 
\begin{align*}  ||u_t - u^*||_2 & \leq   \sqrt{n} ||u_t - u^*||_{\infty}
 \leq  2 \sqrt{n}  ||u_t - \pi u_t {\bf 1}||_{\infty}& \\ 
& \leq   2 \sqrt{n} ||P_t' \cdots P_1'||_2  ||( I - {\bf 1} \pi) u_0 ||_2.&
\end{align*} Therefore 
\begin{align*}
    \gamma_{\rm asym} &= \sup_{u_0 \notin U} \lim \sup_{t \rightarrow \infty} \left( \frac{||u_t - u^*||_2}{||u_0 - u^*||_2}\right)^{1/t} \\
    & \leq \lim \sup_t ||P_t' \cdots P_1'||_2^{1/t}.
\end{align*}

Next, observe that we can repeat the same argument but beginning at iteration $k$ rather than iteration $1$. That is:
\begin{align*} \gamma_{\rm asym} &  \leq   \sup_{u_k \notin U} \lim \sup_t \left( \frac{||u_t - u^*||_2}{||u_k - u^*||_2}\right)^{1/(t-k)}& \\ 
 & \leq   \lim \sup_t ||P_t' P_{t-1}' \cdots P_k'||^{1/(t-k)}.&
\end{align*}
In particular, we have that for every $k$,
\begin{equation} \label{gammaineq} \gamma_{\rm asym} \leq \rho(\{ P_k', P_{k+1}', \ldots \}), 
\end{equation} where $\rho(\mathcal{M})$ is the joint spectral radius of the matrix set $\mathcal{M}$, defined as $ \rho(\mathcal{M}) = \lim_{m \rightarrow \infty} \sup ||\Pi_m||_2^{1/m},$ where the supreme is taken over all products $\Pi_m$ of $m$ matrices from the set $\mathcal{M}$. We refer the reader to \cite{jungers2009joint} for background on the joint spectral radius. In particular, by Lemma 1.2 of \cite{jungers2009joint}, we have that for any bounded set $\mathcal{M}$, we have that for any fixed $m$, $ \rho(\mathcal{M}) \leq  \sup ||\Pi_m||_2^{1/m}.$

We next argue that the right-hand side of (\ref{gammaineq}) can be bounded by $\rho(P')$ as $k \rightarrow \infty$. Indeed, for any $\epsilon > 0$, by definition of joint spectral radius there is a large enough integer $m$ so that 
$ \rho(\{P'\}) > \max ||\Pi_m||^{1/m} - \epsilon,$ keeping in mind that the $\max$ on the right-hand side is over a single product, namely $(P')^m$. Next, choose $r$ small enough so that if $\mathcal{M}$ is taken to be the ball of radius $r$ around $P'$, then the right-hand side of the last inequality only changes by $\epsilon$. Finally, choose $k$ large enough so that every $P_t', t\geq k$ lies in a ball of radius $r$ around $P'$. Putting all this together, we have that 
\begin{align*} \rho(\{P_k', P_{k+1}',\ldots \}) \leq & \rho(B_r(P'))  \leq \max_{\mathcal{M}.= B_r(P')} ||\Pi_m||^{1/m} \\ 
\leq & \max_{\mathcal{M}.= P'} ||\Pi_m||^{1/m} + \epsilon \leq  \rho(\{P'\}) + 2 \epsilon.
\end{align*}  Since $\epsilon$ is arbitrary, we have that $\lim \underset{k \rightarrow \infty}{\sup} \rho(\{P_k', P_{k+1}',\ldots \})\\  \leq \rho(P')$. Putting this together with (\ref{gammaineq}), we conclude that $\gamma_{\rm asym} \leq \rho(P')$ as desired. 

Finally, we note that because $\pi$ is a left-eigenvector of $P$ with eigenvalue $1$, we have that $P'=P - {\bf 1} \pi$. Thus $\gamma_{\rm asym} \leq \rho(P - {\bf 1} \pi)$ and the proof is concluded. 
\end{proof}{}

We conclude this section by putting together all the previous lemmas to obtain a variational upper bound on the convergence rate. 

\begin{corollary} Under the conditions of Lemma \ref{lemma:conv}, we have that $$\gamma_{\rm asym} \leq \max\{ 1-\frac{1}{2} \min_{x \in S} \sum_{i=1}^n \sum_{j=1}^n \pi_i p_{ij} (x_i-x_j)^2,-\lambda_n(P)\},$$ where $S=\{x|\sum_{i=1}^n \pi_ix_i=0, \sum_{i=1}^n \pi_i x_i^2=1\}.$
\label{cor:maincor}
\end{corollary}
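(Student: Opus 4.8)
The plan is to assemble the two inequalities already established and then to express $\lambda_2(P)$ variationally in the $\pi$-weighted inner product. By Lemma~\ref{ga-lam} we have $\gamma_{\rm asym} \le \rho(P - \bm{1}\pi)$, and by Lemma~\ref{lemma:properties} this spectral radius equals $\max\{\lambda_2(P), -\lambda_n(P)\}$. Hence it suffices to establish the representation
\[
\lambda_2(P) = 1 - \frac{1}{2}\min_{x \in S} \sum_{i=1}^n\sum_{j=1}^n \pi_i p_{ij}(x_i - x_j)^2,
\]
after which substitution into $\max\{\lambda_2(P),-\lambda_n(P)\}$ yields the claimed bound directly.

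First I would record the Dirichlet-form identity. Writing $\langle x, y\rangle_\pi = \sum_{i} \pi_i x_i y_i$, a short expansion that uses only the stochasticity $\sum_j p_{ij} = 1$ and the reversibility $\pi_i p_{ij} = \pi_j p_{ji}$ (Lemma~\ref{rev}, Lemma~\ref{lemma:properties}) gives
\[
\frac{1}{2}\sum_{i=1}^n\sum_{j=1}^n \pi_i p_{ij}(x_i - x_j)^2 = \langle x, (I - P)x\rangle_\pi .
\]

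Next I would invoke the variational (Courant--Fischer) characterization of eigenvalues, but carried out in $\langle \cdot,\cdot\rangle_\pi$ rather than the standard Euclidean inner product. This is legitimate precisely because Lemma~\ref{lemma:properties} shows $P$ is self-adjoint for $\langle\cdot,\cdot\rangle_\pi$, with principal eigenvector $\bm{1}$ and all other eigenvectors $\pi$-orthogonal to $\bm{1}$; equivalently one may conjugate by ${\rm diag}(\sqrt{\pi})$ to obtain a symmetric matrix with the same spectrum and apply the ordinary Rayleigh quotient. This yields $\lambda_2(P) = \max\langle x, Px\rangle_\pi$ over the set $\{\langle \bm{1}, x\rangle_\pi = 0,\ \langle x, x\rangle_\pi = 1\}$, which is exactly $S$ since $\langle \bm{1}, x\rangle_\pi = \sum_i \pi_i x_i$ and $\langle x, x\rangle_\pi = \sum_i \pi_i x_i^2$. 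On $S$ the Dirichlet identity converts the objective into $\langle x, Px\rangle_\pi = 1 - \frac{1}{2}\sum_{i,j}\pi_i p_{ij}(x_i - x_j)^2$, so maximizing $\langle x, Px\rangle_\pi$ over $S$ is the same as subtracting from $1$ the minimum of the quadratic form, giving the displayed formula for $\lambda_2(P)$.

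The main obstacle is this second step: one must be careful to run Courant--Fischer in the weighted inner product and to confirm that the constraint $\sum_i \pi_i x_i = 0$ really does encode $\pi$-orthogonality to the top eigenvector $\bm{1}$, so that the maximum picks out $\lambda_2$ and not $\lambda_1 = 1$. Given the self-adjointness from Lemma~\ref{lemma:properties}, this is the only delicate point; the Dirichlet-form expansion and the identification of the constraint set with $S$ are then routine algebra, and substituting the resulting expression for $\lambda_2(P)$ into $\max\{\lambda_2(P), -\lambda_n(P)\}$ completes the proof.
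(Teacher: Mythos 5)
Your proposal is correct and takes essentially the same approach as the paper: combining Lemma \ref{ga-lam} with Lemma \ref{lemma:properties} to get $\gamma_{\rm asym} \leq \max\{\lambda_2(P), -\lambda_n(P)\}$ and then replacing $\lambda_2(P)$ by its variational characterization over $S$. The only difference is that the paper outsources that characterization to a citation (\cite{olshevsky2011convergence}), whereas you derive it yourself via the Dirichlet-form identity and Courant--Fischer in the $\pi$-weighted inner product, both of which are carried out correctly.
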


\begin{proof} Lemma \ref{ga-lam} shows that $\gamma_{\rm asym} \leq \max \{\lambda_2(P) , \\-\lambda_n(P) \}$. This corollary simply replaces $\lambda_2(P)$ with its variational characterization; these were proved for reversible $P$ in \cite{olshevsky2011convergence}.
\end{proof}

We note that it is also possible to replace $-\lambda_n(P)$ by the variational characterization of it, but we will just leave it as $-\lambda_n(P)$ above, as it turns out that there are easy ways to bound it.

\section{An upper bound on the convergence rate}

We can use the main result of the previous section, namely Corollary \ref{cor:maincor}, to obtain an upper bound on the asymptotic convergence rate associated with alternating minimization. This is given in the following theorem, which is our main result.  

\begin{theorem}\label{Plambda} Suppose the assumptions of Lemma \ref{lemma:conv} hold, and additionally the graph $\mathcal{G}$ has maximum degree $\Delta$. Then, $$
    \gamma_{\rm asym} < 1-\frac{b^{12}}{n(n-1)\Delta}< 1-\frac{b^{12}}{n^3}.$$
\end{theorem}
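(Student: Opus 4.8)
The plan is to invoke Corollary \ref{cor:maincor}, which gives
$$
\gamma_{\rm asym} \leq \max\left\{ 1-\tfrac{1}{2} \min_{x \in S} \sum_{i=1}^n \sum_{j=1}^n \pi_i p_{ij} (x_i-x_j)^2,\ -\lambda_n(P)\right\},
$$
and to bound the two terms in the maximum separately. The dominant work is lower-bounding the Dirichlet-form minimum
$$
\Phi := \min_{x \in S} \sum_{i=1}^n \sum_{j=1}^n \pi_i p_{ij} (x_i-x_j)^2
$$
by something of order $b^{12}/(n(n-1)\Delta)$, so that the first term in the max is at most $1 - b^{12}/(n(n-1)\Delta)$. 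The term $-\lambda_n(P)$ I expect to be easy: since each $P_t$, and hence $P$, is stochastic with a diagonal entry bounded below by some explicit power of $b$ (take $i=j$ in \eqref{pdef} and use the bounds $b^3 \leq x_{i,t},y_{j,t} \leq b^{-3}$ from the proof of Lemma \ref{lemma:conv}), a Gershgorin argument forces $\lambda_n(P) > -1 + (\text{positive constant})$, which will comfortably beat the first term. So the crux is the Dirichlet form.

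For the main term I would first get explicit two-sided bounds on the transition probabilities $\pi_i p_{ij}$ over edges of the bipartite graph, and on the stationary weights $\pi_i$, using the uniform bounds $b \leq \alpha_i,\beta_j \leq b^{-1}$ and $b^3 \leq x_{i,t},y_{j,t} \leq b^{-3}$ (passing to the limit). From \eqref{pidef} we have $\hat\pi_{i} = \alpha_i x_i \sum_{k: i\sim k} y_k^2$, so each unnormalized weight is between $b^4 \cdot (\text{degree})$ and $b^{-4}\cdot(\text{degree})$; combined with $\pi_i = \hat\pi_i / \sum_j \hat\pi_j$ this pins down the normalized $\pi_i$ up to factors that are powers of $b$. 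The key structural point is that $P = BC$ connects two rows $i,j$ of the matrix precisely when they share a common revealed column, i.e. when $i$ and $j$ are at distance two in the bipartite graph $\mathcal{G}$; the entry $\pi_i p_{ij}$ is a sum over shared neighbors $l$, and each such term is bounded below by an explicit power of $b$ divided by degree-normalizations. The goal is to show $\pi_i p_{ij} \geq b^{12}/(\text{something}\ \lesssim n\Delta)$ whenever $i$ and $j$ share a neighbor.

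With those edge weights in hand, the lower bound on $\Phi$ reduces to a purely combinatorial/spectral-geometry estimate for a reversible chain on $n$ states. I would use the standard Poincaré / canonical-path machinery: for the constraint set $S$ (mean zero and unit variance in the $\pi$-inner product), one has the identity $\sum_{i<j} \pi_i\pi_j (x_i-x_j)^2 = 1$ for $x \in S$, so it suffices to compare the Dirichlet form $\sum_{i,j} \pi_i p_{ij}(x_i-x_j)^2$ against $\sum_{i<j}\pi_i\pi_j(x_i-x_j)^2$. Routing each pair $(i,j)$ along a path in the distance-two graph (which is connected since $\mathcal{G}$ is connected) and applying Cauchy--Schwarz to telescope $(x_i-x_j)^2$ along the path gives a congestion bound, where the path length is controlled by $n$ and the congestion (maximum number of routed pairs through any single edge) by a factor involving $n$ and the edge-weight lower bound. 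Tracking the worst-case path length and per-edge load yields the $n(n-1)$ factor, and the degree factor $\Delta$ enters through the degree-normalization in $\pi_i p_{ij}$; the power $b^{12}$ accumulates as $b^4$ from $\pi_i$ and $b^8$ from the shared-neighbor terms in $p_{ij}$.

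The main obstacle I anticipate is bookkeeping the exact power of $b$ and making the congestion argument yield precisely $n(n-1)\Delta$ rather than a looser polynomial; in particular, getting the $\Delta$ (as opposed to $\Delta^2$) requires being careful to let each edge of the distance-two graph inherit its weight from a single shared column rather than overcounting, and getting $n(n-1)$ rather than $n^2 d$ requires a clean choice of canonical paths together with the observation $\sum_i \pi_i = 1$. The final inequality $1 - b^{12}/(n(n-1)\Delta) < 1 - b^{12}/n^3$ is then immediate from $(n-1)\Delta \leq n^2$, valid once $\Delta \leq n$.
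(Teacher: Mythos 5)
Your skeleton matches the paper's: invoke Corollary \ref{cor:maincor}, dispose of $-\lambda_n(P)$ by Gershgorin using the positive diagonal bounded below by a power of $b$ over $\Delta$ (this part is exactly the paper's argument and is fine), and concentrate all the work on the Dirichlet form. But your plan for the Dirichlet form has a genuine gap, and it is precisely the one you flagged. Your congestion argument needs the per-edge bound $\pi_i p_{ij} \gtrsim b^{12}/(n\Delta)$, and this is false in general: from (\ref{pdef}) and (\ref{pidef}), an edge of the distance-two graph whose only shared column $p$ has degree $d(p)$ satisfies only $\hat\pi_i p_{ij} \geq b^{8}/d(p)$, while the normalization costs $\sum_k \hat\pi_k \leq b^{-4}|\mathcal{E}| \leq b^{-4} n \Delta$; in a (near-)$\Delta$-regular bipartite graph both losses are tight simultaneously, so the weakest edges have weight $\approx b^{12}/(n\Delta^2)$. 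Single-path routing (e.g.\ over a spanning tree of the distance-two graph) then bottlenecks on such an edge and yields at best $1-\lambda_2 \gtrsim b^{12}/(n(n-1)\Delta^2)$, which does not imply the theorem for large $\Delta$. A concrete obstruction: take two $\Delta$-regular bipartite halves joined only through one column $p^*$ of degree $\Delta$. Every spanning tree must cross via an edge of weight $\approx b^{12}/(n\Delta^2)$, so tree-routed canonical paths lose a full factor of $\Delta$; yet the cut between the halves contains $\approx \Delta^2/4$ such weak edges whose \emph{aggregate} weight is $\approx b^{12}/(4n)$, which is what the theorem actually needs. Your proposed fix (letting each edge ``inherit its weight from a single shared column'') does not address this, because the loss is not an overcounting issue; it is that single-path routing sees only one crossing edge per cut instead of all of them.

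The paper's proof avoids the problem by never reducing to per-edge weights. It sorts $x_1 < \dots < x_n$, uses $(x_j - x_i)^2 \geq \sum_{k=i}^{j-1}(x_{k+1}-x_k)^2$, and exchanges sums so that what must be bounded below is the aggregate weight of each sorted cut, $\sum_{k\leq i,\, l \geq i+1} \hat\pi_k p_{kl} \geq b^8 \sum_{k \leq i,\, l \geq i+1} \sum_{p:\, k\sim p,\, l \sim p} 1/d(p)$, and then invokes inequality (5) of \cite{olshevsky2013degree}: on any connected graph, under any labeling, this double sum is at least $1/2$. Summing over \emph{all} pairs crossing the cut is exactly what recovers the factor $\Delta$ rather than $\Delta^2$. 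The rest is as you envisioned: a Cauchy--Schwarz step (the paper bounds $\sum_i (x_{i+1}-x_i)^2 > b^4/(n(n-1)\Delta)$ using the constraints defining $S$ and $\hat\pi_{\max} \leq b^{-4}\Delta$; your variance identity $\sum_{i<j}\pi_i\pi_j(x_i-x_j)^2 = 1$ would play the same role), then combine with Gershgorin. To rescue your route you would need either a multicommodity-flow (fractional routing) comparison spreading each pair's flow across all crossing edges, or the cut-aggregation inequality the paper cites; as written, the canonical-path bound falls short of the stated $n(n-1)\Delta$ denominator.
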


\begin{proof} Glancing at Corollary \ref{cor:maincor}, we see that we need to bound the variational characterizations of $\lambda_2(P)$ in the statement of the corollary, as well as $-\lambda_n(P$). Our first step is to analyze the variational expression for $\lambda_2(P)$ in that corollary.

Note that the support of $P$ is same as the support of $AA^T$; indeed, 
 $P$ is the transition probability matrix of a certain random walk on $(\mathcal{V}_R,\mathcal{E}_R) \overset{\Delta}{=}\mathcal{G}_P$, where $(i_1,i_2) \in \mathcal{E}_R$ if and only if $i_2$ is a distance two neighbor of $i_1$ in $\mathcal{G}$. This new graph $\mathcal{G}_P$ is connected because, by assumption $\mathcal{G}$ is connected. Therefore, for any $x \in \mathbb{R}^n$, we have that $\sum_{i=1}^n \sum_{j=1}^n \pi_i p_{ij} (x_i-x_j)^2  = \sum_{(i,j) \in \mathcal{E}_R} \pi_{i} p_{ij} (x_i-x_j)^2.$
 
Observe that in Corollary \ref{cor:maincor}, the optimal value does not change when we multiply the vector $\pi$ by a constant factor. Consequently, we will instead deal with the un-normalized quantities $\hat \pi_i$ defined in (\ref{pidef}), which will lead to less cumbersome expressions. 

We now bound the optimal value of optimization problem for $\lambda_2(P)$ appearing in Corollary \ref{cor:maincor}. Let $x$ be any element of $S$. Without losing generality, we can assume that $x_1<x_2<\dots<x_n$ and assume that $x_n$ denotes the component of $x$ which is largest in magnitude (replace $x$ by $-x$ if this is not true). Consider
\begin{align*}&\sum_{(i,j) \in \mathcal{E}_R} \hat \pi_{i} p_{ij} (x_i-x_j)^2 \\
\overset{(a)}{=}& 2 \sum_{(i,j) \in \mathcal{E}_R,i<j} \hat \pi_{i} p_{ij} (x_j-x_i)^2\\
\overset{(b)}{\geq}& 2 \sum_{(i,j) \in \mathcal{E}_R,i<j} \hat \pi_{i} p_{ij} \sum_{k=i}^{j-1} (x_{k+1}-x_k)^2\\
\overset{(c)}{=}& 2 \sum_{i=1}^{n-1} \sum_{k \leq i,l \geq i+1} {\hat \pi}_k p_{kl} (x_{i+1}-x_i)^2\\
\overset{(d)}{=}& 2 \sum_{i=1}^{n-1} \sum_{k \leq i,l \geq i+1} \alpha_k^4 \sum_{p: k \sim p, l \sim p} \frac{\beta_{p}^2 }{\sum_{q: q \sim p} \alpha_q^2} (x_{i+1}-x_i)^2,
\end{align*}
where (a) follows by reversibility; (b) is because that $x_1<x_2<\dots<x_n$; in (c), we rearrange two summations; we use definitions of $p_{kl}$ and $\hat \pi_k$ in (d), as well as the fact that $x_t \rightarrow c \alpha, y_t \rightarrow c^{-1} \beta$ for some $c$. Using the assumption that $b \leq \alpha_i, \beta_j \leq b^{-1}$ and letting $d(j)$ denote the degree of $j \in \mathcal{V}_C$, we have that
\begin{align}\label{rearragelambda2}
    &\sum_{(i,j) \in \mathcal{E}_R} \hat \pi_{i}  p_{ij} (x_i-x_j)^2 \nonumber\\
    \geq& 2 b^8 \sum_{i=1}^{n-1} \left( \sum_{k \leq i,l \geq i+1}  \sum_{p: k \sim p, l \sim p} \frac{1 }{d(p)}\right) (x_{i+1}-x_i)^2\nonumber\\
    \overset{(e)}{\geq}& b^8 \sum_{i=1}^{n-1} (x_{i+1}-x_i)^2,
\end{align}where (e) is followed by (5) in \cite{olshevsky2013degree} which has proved that on any undirected connected graph, regardless of the node labeling we have that $\sum_{k \leq i,l \geq i+1}  \sum_{p: k \sim p, l \sim p} \frac{1 }{d(p)} \geq \frac{1}{2}.$
   
Next, let $\hat \pi_{max}$ denote the component of $\hat \pi$ which has the largest value. Then $1=\sum_{i=1}^n \hat \pi_i x_i^2 \leq n \hat \pi_{max} x_n^2$ and hence $x_n \geq (n \hat \pi_{max})^{-\frac{1}{2}}$ (because by construction, $x_n$ is the entry of $x$ with the largest value). Since $\sum_{i=1}^n \hat \pi_ix_i=0$, then all components of $x$ cannot be positive, so $x_1<0$. Consequently, $x_n-x_1 >(n \hat \pi_{max})^{-\frac{1}{2}}$. By Cauchy–Schwartz inequality, we have that
\begin{small}
\begin{align*}{}
(n \hat \pi_{max})^{-1} <& (x_n-x_1)^2\\=&[(x_{n}-x_{n-1})+(x_{n-1}-x_{n-2})+\cdots+(x_{2}-x_{1})]^2\\\leq& (n-1) \sum_{i=1}^{n-1} (x_{i+1}-x_{i})^2.
\end{align*} 
\end{small}
Dividing by $n-1$ on both sides and using (\ref{pidef}), it is immediate that
\begin{equation}\label{lambda2part}
    \sum_{i=1}^{n-1} (x_{i+1}-x_{i})^2 > \frac{b^4}{n(n-1)\Delta}.
\end{equation}{}Plugging (\ref{lambda2part}) into (\ref{rearragelambda2}) and by variational characterizations, we have that
\begin{equation}\label{lambda2bound}
    \lambda_2(P) < 1-\frac{b^{12}}{n(n-1)\Delta}.
\end{equation}{}

It remains to obtain an upper bound on $-\lambda_n$. It turns out that this is done in the easiest possible way with the Gershgorin circle theorem. Indeed, for the matrix $P$, recall that its diagonal entries are $[P]_{ii} = \frac{\alpha_i^2}{\sum_{k:i \sim k}\beta_{k}^2} \sum_{l \in N(i)} \frac{\beta_{l}^2}{\sum_{k:k \sim l}\alpha_k^2},$ which are bounded below by $\frac{b^8}{\Delta}$. Also, $P$ is a stochastic matrix and hence $R_{i} := \sum_{i \neq j} p_{i,j}$ is smaller than 1. Gershgorin's theorem asserts that each eigenvalue of $P$ is in at least one of the disks $\{ \lambda :| \lambda-p_{ii}| \leq 1\}$ for $i=1,\cdots,n$, and consequently,
\begin{equation}\label{lambdanbound}
    \lambda_n(P)>-1+\frac{b^8}{\Delta}.
\end{equation}{}

Combining (\ref{lambda2bound}), (\ref{lambdanbound}), and Corollary \ref{cor:maincor}, we obtain $\gamma_{{\rm asym}}(P) \leq \max\{\lambda_2(P),-\lambda_n(P)\}< 1-\frac{b^{12}}{n(n-1)\Delta}<1-\frac{b^{12}}{n^3}.$
\end{proof}
\bigskip
\noindent {\em Remark:} We now discuss the implications of this theorem for convergence rate. Given any initial condition $u_0$, we have that, after a finite period, the convergence rate will be upper bounded by $1- b^{12}/(n (n-1) \Delta)$. It follows that for all large enough $t$, we can bound $||u_t - u^*||_2 \leq  \left(1-\frac{b^{12}}{n(n-1)\Delta} \right)^t ||u_0 - u^*||_2,$ which translates into a time of $O( n (n-1) \Delta \log (1/\epsilon))$ until the distance to $u^*$ shrinks by $\epsilon$. Unfortunately, because $t$ needs to be ``large enough,'' this argument only works if we assume $\epsilon$ is small enough. It is an open question to establish a polynomial convergence time which would hold for all $\epsilon$.

\section{Simulations}
We give simulation results in this section to show that how $1/(1-\gamma_{\rm asym})$ varies with $n$ and $b$ and how tight our bounds are in practice. We perform simulations for five different kinds of graphs $\mathcal{G}$, namely, line, star, 2d-grid and 3d-grid and complete graph. We do 1000 experiments for each $n$, $b$ and graph and in each experiment we generate $2n$ random numbers whose values are between $b$ and $1/b$ to initialize $x_{i,0}, y_{j,0}$ for $i,j \in [n]$. Then we get experimental upper bounds $\eta$ for $1/(1-\gamma_{\rm asym})$.

We first set $b=0.3$ and let $n$ change as we estimate the asymptotic convergence rate from examples in five different kinds of graphs. Figures \ref{line_fix_b} and \ref{others_fix_b} show how the convergence times scale as we increase $n$ for line graph and four different kinds of graphs. All examples Figure \ref{others_fix_b} show sublinear growth, which is consistent with the upper bounds of Theorem \ref{Plambda}, which are always at least quadratic. In these cases, the upper bounds we have derived is conservative. However, our results in Figure \ref{line_fix_b}  appear to grow quadratically in $n$, which suggests that on the line graph the upper bound of our main result is tight up to constant factors.

\begin{figure}[!t]
     \centering{\includegraphics[width=\columnwidth]{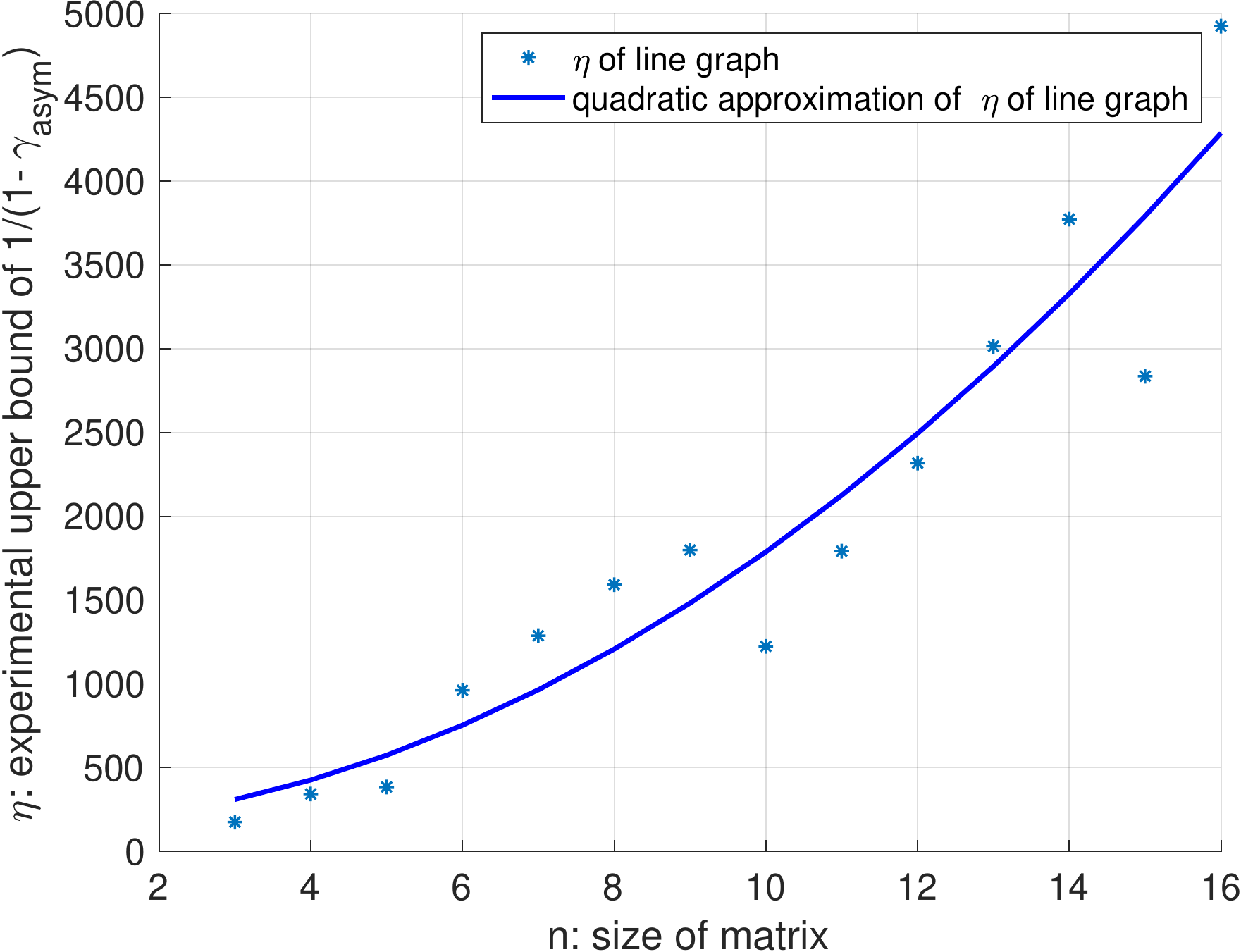}}
     \caption{The experimental upper bounds of line graph when $b=0.3$}
     \label{line_fix_b}
\end{figure}

\begin{figure}[!t]
     \centering{\includegraphics[width=\columnwidth]{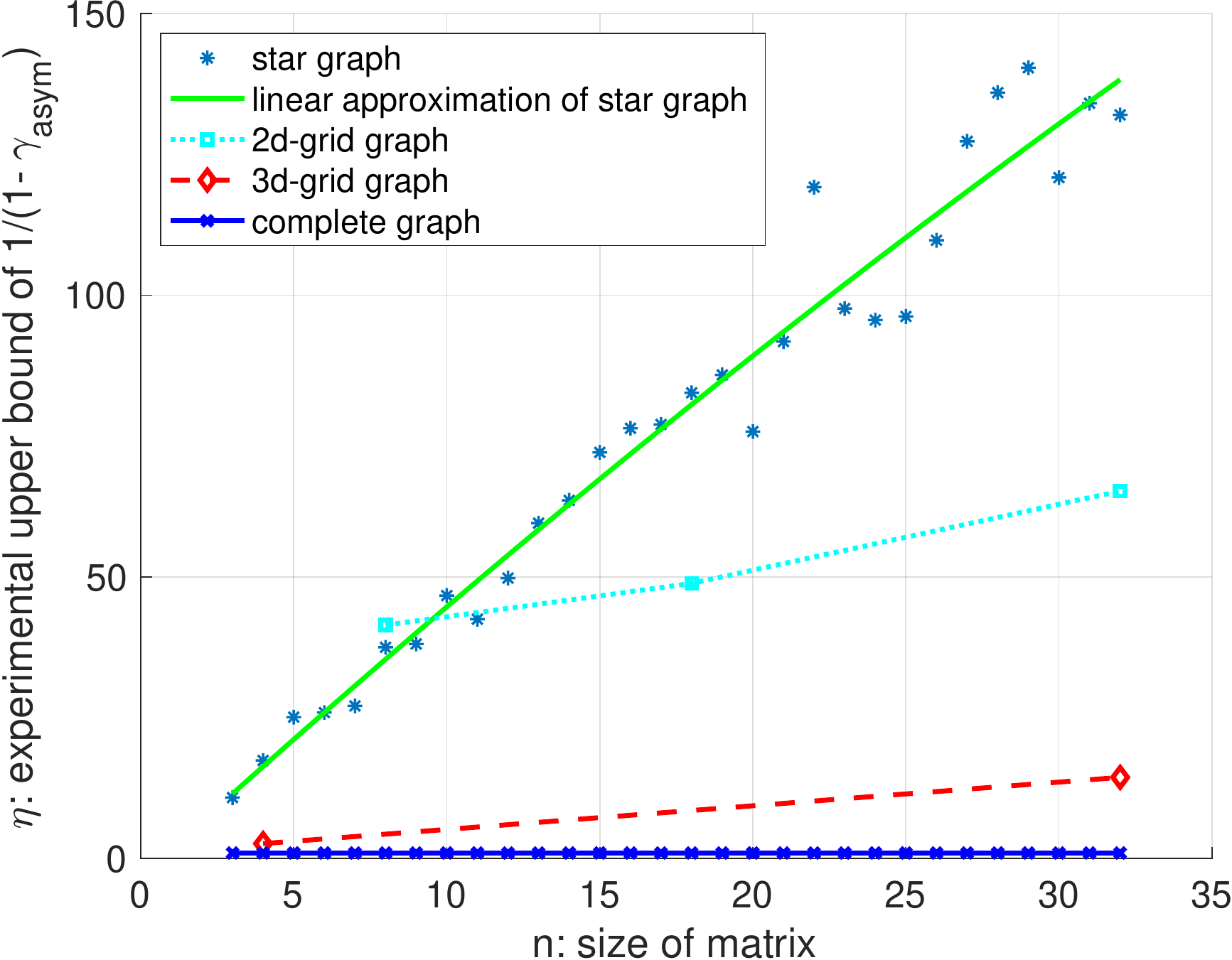}}
     \caption{The experimental upper bounds of graphs when $b=0.3$}
     \label{others_fix_b}
\end{figure}

In Figure \ref{line_fix_n} and Figure \ref{others_fix_n}, we instead fix $n=32$ and let $b$ changes from 0.01 to 1. The results show that indeed it takes more time to converge when $b$ becomes smaller. Our main result also has such a scaling with $b$.  

\begin{figure}[!t]
     \centering{\includegraphics[width=\columnwidth]{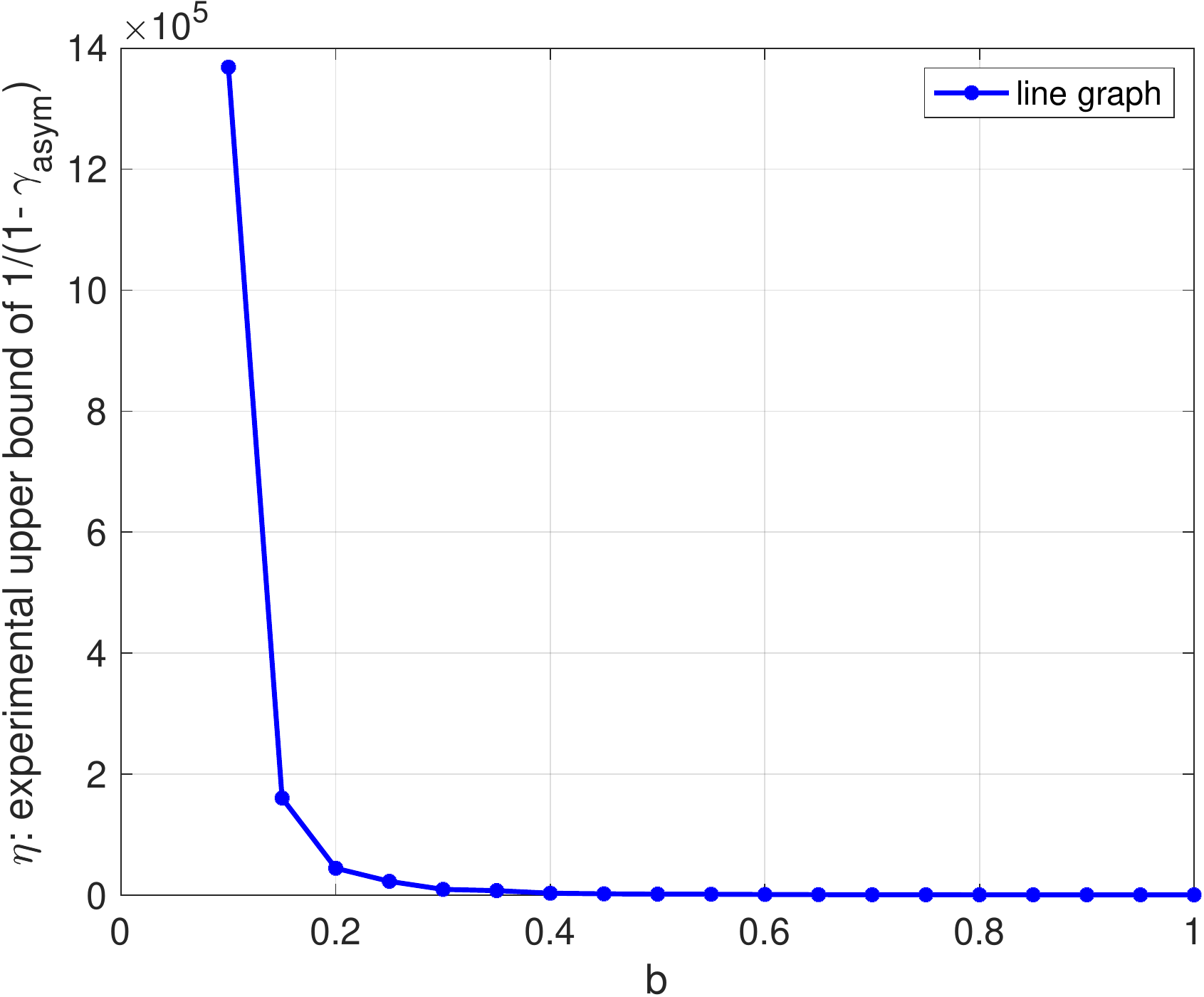}}
     \caption{The experimental upper bounds of line graph when $n=32$}
     \label{line_fix_n}
\end{figure}

\begin{figure}[!t]
     \centering{\includegraphics[width=\columnwidth]{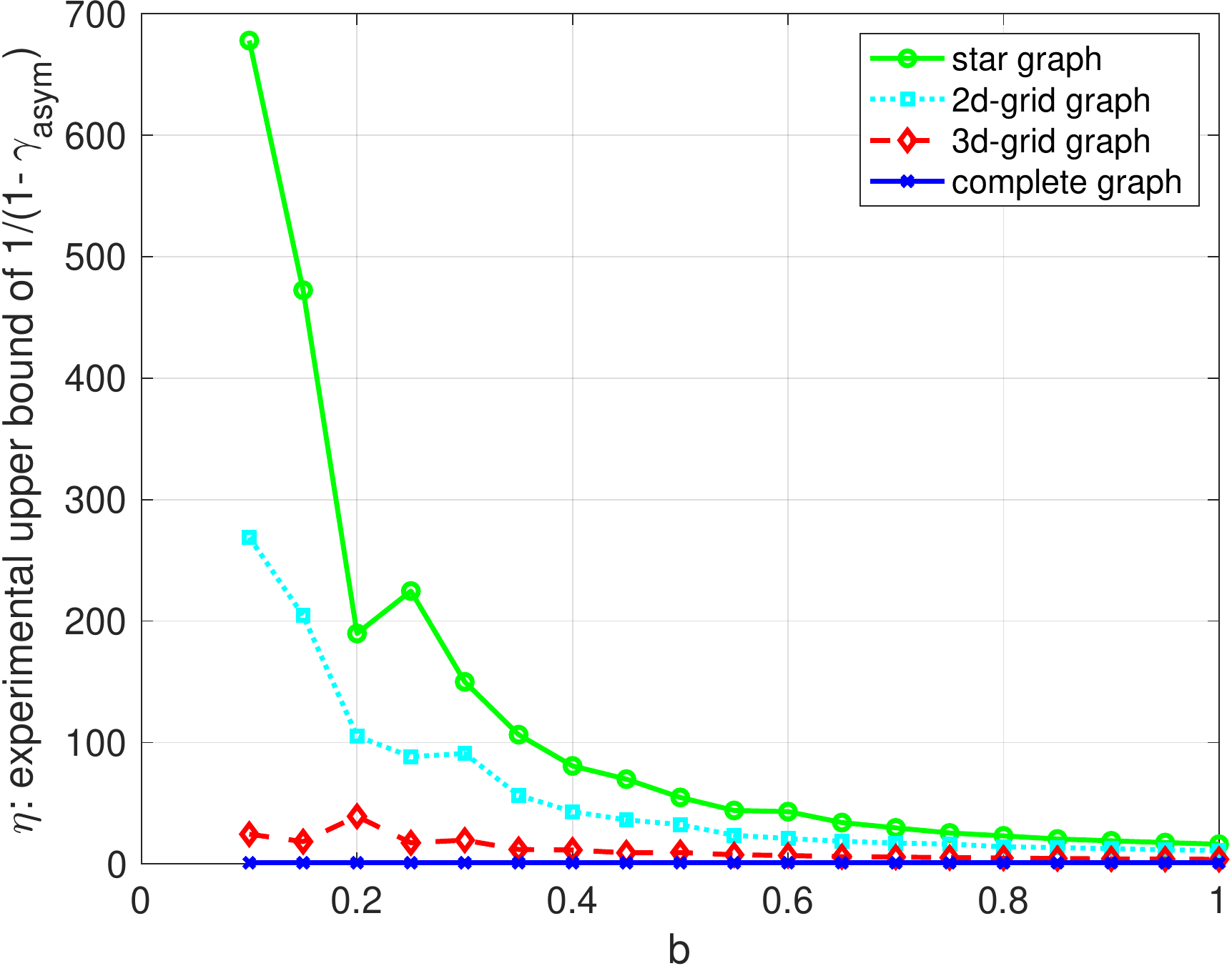}}
     \caption{The experimental upper bounds of graphs when $n=32$}
     \label{others_fix_n}
\end{figure}

In summary, our numerical results suggest that on the line graphs our results are tight, whereas on grids and star graphs, our estimate of the convergence time is conservative. Additionally, simulations suggest the blowup in convergence time as $b \rightarrow 0$ is not merely a feature of our main theorem but also happens in practice. 

\section{CONCLUSIONS}

Our main result has been a derivation of a polynomial bound on the asymptotic convergence rate of alternating minimization for rank-one matrix completion. The main open question left by our work is whether a polynomial convergence time can be proven. This will require a non-asymptotic analysis of the dynamics described here. This appears challenging, as equation (\ref{u&v}) is essentially a switched linear system, and there is no obvious Lyapunov function which would lead to a polynomial rate. 

Furthermore, the approach provides a way to begin analyzing the general case of higher rank matrix completion, i.e., we can write alternating minimization as a consensus problem, even for higher-rank matrices. However, the problem is that the coefficients of this linear combination are time-varying, depending on the current iterate, and might be negative.

\bibliographystyle{IEEEtran}  
\bibliography{IEEEabrv,references}
\end{document}